\title{Causal Discovery from Incomplete Data using An Encoder and Reinforcement Learning}
\author{%
	Xiaoshui Huang$^{1,3}$, Fujin Zhu$^{2}$, Lois Holloway$^{3}$, Ali Haidar$^{4,3}$ \\
	1. University of Sydney, 2. University of Technology Sydney \\ 3. Ingham Institute, 4. University of New Sourth Wale\\
	\texttt{Xiaoshui.Huang@sydney.edu.au; Fujinzhu.bit@gmail.com;}\\
	\texttt{Lois.Holloway@health.nsw.gov.au; A.Haidar@unsw.edu.au} \\
}
\begin{document}
	
	\maketitle
	
	\begin{abstract}
		Discovering causal structure among a set of variables is a fundamental problem in many domains. However, state-of-the-art methods seldom consider the possibility that the observational data has missing values (incomplete data), which is ubiquitous in many real-world situations. The missing value will significantly impair the performance and even make the causal discovery algorithms fail. In this paper, we propose an approach to discover causal structures from incomplete data by using a novel encoder and reinforcement learning (RL). The encoder is designed for missing data imputation as well as feature extraction. In particular, it learns to encode the currently available information (with missing values) into a robust feature representation which is then used to determine where to search the best graph. The encoder is integrated into a RL framework that can be optimized using the actor-critic algorithm. Our method takes the incomplete observational data as input and generates a causal structure graph. Experimental results on synthetic and real data demonstrate that our method can robustly generate causal structures from incomplete data. Compared with the direct combination of data imputation and causal discovery methods, our method performs generally better and can even obtain a performance gain as much as 43.2\%.
	\end{abstract}
	
	\section{Introduction}
	Causal discovery from natural phenomena is a fundamental problem across many domains, such as biology \cite{sachs2005causal}, economics \cite{pearl2009causality} and genetics \cite{peters2017elements}. Although  randomized control experiments are the gold standard for inferring causal relationships \cite{hernan2010causal}, it is impossible or very expensive in many fields such as  patient treatment \cite{whatif} and genetics \cite{peters2017elements}. Rather than designing randomized experiments, causal discovery algorithms attempt to infer causality automatically from passive observational data.  
	
	Typically, the causal relationships among a set of variables could be represented as a directed acyclic graph (DAG). To estimate the graph from observational data, there are mainly two classes of approaches in the literature \cite{triantafillou2016score}: constraint-based and score-based. Constraint-based approaches use conditional independence tests to check the existence of causal relationship between each pair of variables and try to find a graph that entails all the corresponding $d$-separations \cite{pearl2010introduction}. In contrast, score-based approaches try to find a graph $G$ that maximizes the likelihood of the data given $G$. Equivalently, they attempt to search the ''optimal'' causal graph $G$ from the directed acyclic graph space (denoted as $DAGs$) in a combinatorial optimization manner by minimizing a score function 
	\begin{eqnarray}
	\min_{G\in {DAGs}} S(G)
	\label{eq1}
	\end{eqnarray}
	
	The problem in Eq.(\ref{eq1}) is a typical NP-hard problem because the number of DAGs increase super-exponentially in the number of graph nodes \cite{chickering2004large}. Many methods have been proposed to solve this problem by designing various score functions \cite{peters2014causal, huang2018generalized}. Recently, Zheng et al. \cite{zheng2018dags} recast this combinatorial optimization problem into a continuous gradient-based optimization one which allows for nonlinear causal relationships parameterized by neural networks. Furthermore, Zhu et al. \cite{Zhu2020Causal} proposed to use a reinforcement learning agent to automatically determine the search direction in this continuous optimization problem to find the DAG with the best score.
	
	However, existing DAG searching methods, either based on combinatorial or continuous optimization, seldom consider settings with incomplete data that the observational data used for causal discovery have missing values. This phenomenon is ubiquitous in many real-world situations, which can significantly impair the final learning performance and even make the algorithms fail. For example, according to our evaluation, the algorithm in \cite{Zhu2020Causal} will not work in such a setting. To deal with the challenge of data with missing values, in this paper, we propose an approach that is able to learn the causal relationship from incomplete data.
	
	Back to the information processing procedure in our human brain, we can still obtain some basic information though the observational data has missing values. For example, if a patient is an elderly patient and does not present with a good level of fitness we might assume they are not doing any exercise. To imitate such a brain information processing procedure about incomplete data, we propose in this paper a \textit{Encoder} that uses neural networks to encode the non-missing information into a feature representation. The derived representation is then integrated into a reinforcement learning (RL) framework for searching the best causal graph. According to \cite{Zhu2020Causal},  the reason that the RL agent works well lies in that the stochastic policy can be updated properly by using the rewards. Hence, the RL agent can determine automatically where to search.
	
	The contributions of this paper are as follows:
	\begin{itemize}
		\item an RL-based approach for learning causal graphs from incomplete data.
		\item an ad-hoc encoder for extracting features from incomplete observational data for the sake of causal discovery. As a result, the whole RL framework could be optimized in an end-to-end manner while using incomplete data.
	\end{itemize}

	\section{Related Work}
	Most of the existing DAG learning algorithms could be divided into two categories: score-based and constraint-based approaches. Score-based algorithms search the space of all possible structures to maximize a decomposable score using greedy, local, or some other search algorithms. A typical example is the Greedy-Equivalent-Search (GES) algorithm \cite{chickering2002optimal,nandy2018high}. Some popular scoring functions are the Bayesian Dirichlet score\cite{de2010properties}, the Akaike Information Criterion \cite{sakamoto1986akaike}, the Bayesian Information Criterion (BIC) \cite{schwarz1978estimating} or Minimum Description Length (MDL) score \cite{chickering2002optimal}.  In contrast, constraint-based methods exploit the property of Bayesian networks that edges encode conditional dependencies. Typical examples include the well-known PC algorithm \cite{spirtes2000causation} and the FCI algorithm \cite{zhang2008completeness}. In addition, there is a suite of hybrid algorithms that combine score-based and constraint-based methods. A  prominent  example  is  the Max-Min  Hill-Climbing  (MMHC) algorithm \cite{tsamardinos2006max}. The main idea is find  the skeleton using  a  constraint-based  method  and  then orient  the  edges by a search-based method \cite{kalisch2014causal}. 
	
	As we have discussed in the previous section, since the number of DAGs increase super-exponentially in the number of graph nodes, causal discovery, as a combinatorial search problem, is NP-hard. As a result, traditional algorithms have mainly focused on small graphs and discrete variables. Recently, Zheng et al. \cite{zheng2018dags} proposes a new continuous optimization approach called NOTEARS to transform the discrete search procedure into an equality constraint. This is a pioneer work to enable the recent neural network to be applied to solve DAG learning. For example, following NOTEARS, DAG-GNN \cite{yu2019dag} uses a graph neural network to solve the DAG learning. GraN-DAG \cite{lachapelle2019gradient} extends it to deal with nonlinear relationships between variables using neural networks. 
	
	However, these methods all assume that the observational data has no missing values. This assumption may not be realistic in real applications. Observational data in many applications always has missing values \cite{josse2018introduction}. For instance, patient observational data is missing since many clinician systems think filling the form is unrelated to the treatment of illness and they may not think highly of these data, therefore, they usually do not filled the form completely. This paper aims to release this assumption to discover the causal relationships from incomplete observational data. 
	
	\begin{figure}[th]
		\centering
		\includegraphics[width=\linewidth]{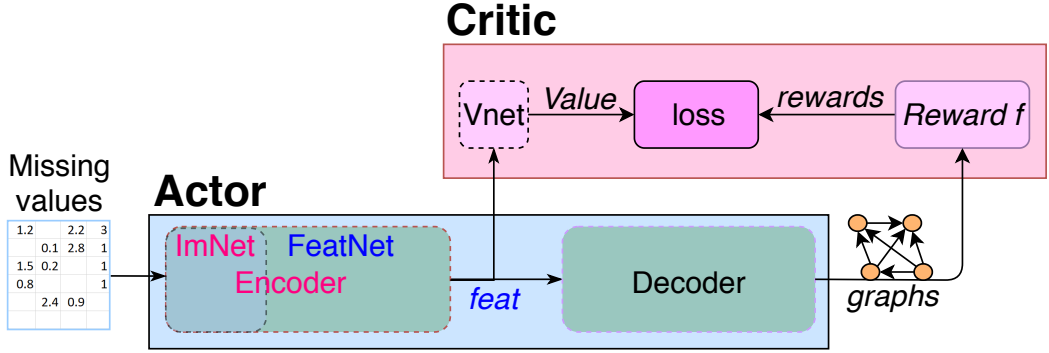}
		\caption{The proposed reinforcement learning framework of causal discovery from incomplete data. The \textbf{Actor} is an encoder-decoder neural network; the encoder consists of an imputation network (ImNet) and a feature extraction network (FeatNet). The \textbf{Critic} uses a Value network (VNet) to calculate a value for the encoded feature and calculate a reward for the decoded graphs. Both the reward and value are integrated into a loss for updating the \textit{Actor}.}
		\label{f1}
	\end{figure}
	\section{Approach}
	Our approach falls into the framework of continuous optimization based causal discovery using reinforcement learning. In this section, we firstly introduce the problem formulation and notations.  Details of the proposed actor that generates candidate causal graphs from observational data with missing values is then described. Finally, we explain how the critic helps to evaluate the generated graphs and thus provides us rewards for training the whole neural network. The overall framework of our causal discovery approach is illustrated in Figure \ref{f1}. 
	
	\subsection{Problem Formulation}
	Given an observational dataset $X \in R^{n\times d}$, where each individual unit $x_i \in R^d$ has $d$- dimensional attributes.  Each attribute $x_{ij}$ is associated with a node $j$ in a $d$-node DAG $G$, and the observed value of $x_i$ is a sample of this DAG, totally $n$ samples. To infer causal relationships among these attributes, we want to find an adjacency matrix $A \in R^{d\times d}$. Each value of the adjacency matric $A$ describes the causality between these two attributes. For example, $A_{ij}=0.5$ means the attribute $i$ has an effect to attribute $j$ and its weight is $0.5$. 
	
	Since the samples usually have missing values on the attributes, the observational data $X$ contains missing values. Missing values are toxic to the existing DAG learning methods which will make the existing optimization-based or neural network methods fail. 
	
	To cope with this challenge, we propose a reinforcement learning framework to discover the causal structure from observational data with missing values. The overall framework is illustrated in Figure \ref{f1}. In particular, given a sample of incomplete observational data, an encoder-decoder (\textit{Actor}) generates a robust feature and a graph. Then, the feature is input into a value network to compute the value and the graph input into a reward function to calculate the reward. The value and reward are utilized to calculate the loss to optimize the whole network.
	
	\subsection{Reinforcement Learning: Actor}
	\label{actor}
	In the RL-based causal discovery framework, the objective of the actor is to generate candidate causal graphs from a given observational data. In this paper, our proposed actor is an encoder-decoder neural network that is able to handle missing values in the data.
	
	\textbf{Encoder} The encoder aims to extract robust features by using the incomplete data. The proposed encoder neural network consists of two parts: the imputation network (ImNet) and the feature extraction network (FeatNet). The objective of ImNet is to impute a complete data from the current incomplete data, while the FeatNet is to extract the features from the imputation data.
	
	\textbf{Encoder: ImNet} The imputation network contains three fully-connected layers. The neurons of each layer are the same to the attribute dimension $d$ and the output is the same dimension as the input data. The first two layers use \textit{relu} as the activation function and the final layer uses the \textit{sigmod} function.
	
	Given the observational data with missing values, $X$, we define a mask matrix $M\in R^{n\times d}$ where $M_{ij}=1$ if the element $X_{ij}$ is observed and $M_{ij}=0$ otherwise. Inside the ImNet, we firstly concat $X$ with $M$, and then generate the initial output by using fully-connected layers. The main idea is that we let the neural network to automatically learn a feature from both the $X$ and $M$, and then convert the feature into an matrix of imputation data. Through this process, we obtain an initial output $X_{im}$ with the same dimension to $X$. Formally, 
	\begin{equation}
	X_{im} = ImNet(X, M)
	\end{equation}
	
	After the initial output is generated, the final complete data is calculated by  
	\begin{equation}
	X_{in} = (1-M)* X_{im} + M * X
	\end{equation}
	
	That is, the final complete data is obtained by taking the partial observation $X$ and replacing each missing value with the corresponding value of $X_{im}$. 
	
	In practice, we initialize the ImNet with pre-trained parameters for the sake of faster training. The pre-trained parameters are learned by a generative adversarial network architecture where our ImNet acts as the generator. We use the same discriminator as in \cite{yoon2018gain}. On one hand, the goal of the generator is to accurately impute missing data; on the other hand, the discriminator tries to distinguish between imputed and observed elements in the input data. They are trained iteratively in an adversarial schema.
	
	
	\textbf{Encoder: FeatNet} 
	In our encoder-decoder neural network for generating causal graphs, we adopt the \textit{Transformer} structure \cite{vaswani2017attention} for extracting robust features from the imputed data. Our empirical results indicate that the self-attention network is capable of extracting robust features from the imputed observational data. Overall, the input of the feature extracting network (FeatNet) is the imputed data matrix outputed from ImNet and its output is denoted as $feat \in R^{d\times k}$, where $k$ is the feature dimension of each attribute and $feat_i \in R^k$ is the feature of attribute node $i, (i=1,...d)$.
	
	\textbf{Decoder} 
	The decoder aims to generate graphs from the features. Inspired by \cite{Zhu2020Causal}, we use a single fully-connected layer as our decoder. The reason is that the one-layer network already contains enough ability to build graph \cite{Zhu2020Causal}. Moreover, a deep graph network will be difficult to train and the output may be indistinguishable \cite{li2018deeper}. Formally, the decoder used in this paper is
	\begin{equation}
	g_{ij}(W_1, W_2, U) = U^T tanh(W_1feat_i + W_2feat_j)
	\end{equation}
	where $W_1 , W_2 \in R ^{h\times k}, U \in  R^{h\times 1}$ are neuron matrix that need to be trained. $h$ is the dimension of neurons in the decoder and $k$ is the dimension of each feature $feat_i$ from the encoder. Then, we input $g_{ij}$ into a logistic sigmoid function $\theta(\cdot)$ to get a probability $p$ of an edge emitting from $x_i$ to $x_j$. Finally, to generate a binary adjacency matrix $A$, we sample according to a Bernoulli distribution with probability $p$. 
	
	\subsection{Reinforcement Learning: Critic}
	The job of the critic is to evaluate actions generated by the actor so that the actor can update its policy based on the evaluation score \cite{konda2000actor}. In this paper, the \textit{Critic} has two objectives: generating a value score for encoded features (\textit{Value}) and calculating a reward score for the decoded graphs (\textit{Reward}).
	
	\textbf{Vnet}  We use a neural network to calculate the value score of the encoded feature. This value score is used to calculate a discounted reward as \textit{Loss} in the following section. That is, the update of \textit{Actor} need to consider both the feature quality and graph quality. In this paper, the value neural network (Vnet) consists of two fully-connected layers. The input is the features from encoder and the output is a number that evaluate the value of the encoded features.
	\begin{equation}
	Value = fc(relu(fc(feat_v)))
	\end{equation}
	where $fc$ is the fully-connected layer, $relu$ is the non-linear activation function and $feat_v$ is the feature from the encoder neural network.
	
	\textbf{Reward function} Given a candidate graph $G$ generated by the decoder neural network, the reward function evaluates how well the graph fits the observational data. In this paper, we follow \cite{lachapelle2019gradient, zheng2018dags,Zhu2020Causal} and use the following score function:
	\begin{equation}
	S(G) = n*d*log((\sum_{i=1}^{d} RSS_i)/(n*d)) + \log n * Card(edges)
	\end{equation}
	where $\sum_i RSS_i$ is the least square loss that maximizes the likelihood for a Gaussian model used in \cite{zheng2018dags}. $Card(edges)$ is the number of edges in $G$.
	
	\textit{Acyclic constrain}: To make sure the generated graph is a DAG, we follow \cite{zheng2018dags} and also introduce the following acyclic constraint to the adjacency matrix $A$.
	\begin{equation}
	h(A):Tr(e^{A\bigodot  A}) - d = 0
	\end{equation}
	where $Tr(\cdot)$ calculate the trace, $e^M=\sum_{k=0}^{\infty}$ is the matrix exponential and $\bigodot$ is the Hadamard product.
	
	\textbf{Rewards} The final reward is calculated by incorporating the above score function and additional \textit{Acyclic constraints}. Formally, it is defined as
	\begin{equation}
	reward = -[S(G) + \lambda_1 I(G\notin DAGs) + \lambda_2 h(A)]
	\end{equation}
	where $I(\cdot)$ is the indicator function, $\lambda_1$ and $\lambda_2$ are two penalty coefficients.
	
	\textbf{Loss} To better optimize the actor, we use a discounted reward as the final reward. This discounted reward serves as a loss function for optimizing the actor.
	\begin{equation}
	Loss = \frac{1}{d}\sum_{i=1}^{d}(reward-value_i) + \lambda_3 \frac{1}{d}log^{prob}
	\end{equation}
	where $log^{prob}$ is the regularized entropy generated graph from the decoder Bernoulli distribution.
	
	\newtheorem{theorem}{Proposition}
	\begin{theorem}
		Minimizing the loss by backward optimizing the neural network is equivalent to optimizing the actor in reinforcement learning. The best actor is achieved iff 
		\begin{equation}
		\bigtriangledown Loss = 0
		\end{equation}
	\end{theorem}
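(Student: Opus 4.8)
The plan is to make precise the correspondence between the training objective above and the standard actor-critic formulation, and then to read off the optimality claim as a first-order stationarity statement. First I would write the actor's policy $\pi_\theta$ explicitly: the encoder-decoder with parameters $\theta = (\theta_{ImNet}, \theta_{FeatNet}, W_1, W_2, U)$ induces, for a fixed incomplete sample, a product-of-Bernoulli distribution over adjacency matrices $A$, so the decoder's sampling step \emph{is} drawing $G \sim \pi_\theta$. The reinforcement-learning objective to be maximized is the expected return $J(\theta) = E_{G \sim \pi_\theta}[\, reward(G)\,]$, with the acyclicity term $h(A)$ and the edge penalty already folded into $reward$ as in its definition above, augmented by the entropy bonus $\lambda_3 \mathcal{H}(\pi_\theta)$ that encourages exploration.

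Second, I would invoke the policy gradient theorem to write $\nabla_\theta J(\theta) = E_{G\sim\pi_\theta}[\, \nabla_\theta \log \pi_\theta(G)\,( reward(G) - b)\,]$ for an arbitrary state-dependent baseline $b$; choosing $b$ to be the per-node value $value_i$ emitted by the Vnet yields exactly the advantage estimate $\frac{1}{d}\sum_{i}( reward - value_i)$ that appears in $Loss$. Since $\log \pi_\theta(G) = \sum_{ij} \log \mathrm{Bern}(A_{ij}; p_{ij})$ is precisely the $\log^{prob}$ term, a direct computation shows that $-\nabla_\theta Loss$ is a Monte-Carlo estimate of $\nabla_\theta[\, J(\theta) + \lambda_3 \mathcal{H}(\pi_\theta)\,]$. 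Hence one gradient-descent step on $Loss$ by backpropagation coincides, term for term, with one actor update of the actor-critic algorithm, which gives the ``equivalent to optimizing the actor'' half of the proposition.

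Third, for the characterization of the best actor, I would note that a stationary point of actor-critic ascent on the regularized return is by definition a $\theta^{*}$ with $\nabla_\theta J(\theta^{*}) + \lambda_3 \nabla_\theta \mathcal{H}(\pi_{\theta^{*}}) = 0$, i.e. $\nabla Loss = 0$. The forward direction (best actor $\Rightarrow \nabla Loss = 0$) is then the first-order necessary condition for an interior maximizer within the parameterized family. For the converse I would appeal to the two-timescale actor-critic convergence result of Konda and Tsitsiklis: with the critic tracking the true value function and under the usual Robbins--Monro step-size conditions, the actor iterates converge to the set $\{\theta : \nabla Loss = 0\}$, so within the realizable policy class the best attainable actor is exactly the one identified by the vanishing-gradient condition.

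The step I expect to be the main obstacle is the converse direction of the ``iff.'' For a generic non-convex encoder-decoder network, $\nabla Loss = 0$ also holds at saddle points and at suboptimal local minima, so the clean equivalence is only literally true either under a no-spurious-minima (or convex-reparameterization) assumption on the policy class, or if ``best actor'' is read as ``stationary policy of the actor-critic dynamics'' rather than ``global optimum.'' I would therefore state the proposition under that reading, observe that it matches the convergence guarantees actually available for actor-critic methods, and leave the strengthening to global optimality as outside the present scope.
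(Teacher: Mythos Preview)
Your proposal is correct and, in fact, considerably more careful than the paper's own argument. The paper's proof is essentially a two-sentence assertion: it cites Konda and Tsitsiklis for the fact that actor-critic updates the policy along an approximate gradient direction, states that the actor is best when that gradient vanishes, and then observes that backpropagation through the network computes the same gradient, concluding the equivalence. There is no explicit unpacking of the policy as a product of Bernoullis, no invocation of the policy-gradient theorem with a baseline, and no discussion of the entropy term or of the Monte-Carlo nature of the estimate.

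What you do differently, and what it buys: by writing out $\pi_\theta$ and matching the advantage-times-score structure to the displayed $Loss$, you make the ``equivalence'' claim concrete rather than nominal. More importantly, your final paragraph is the only place in either argument where the fragility of the ``iff'' is acknowledged: the paper treats $\nabla Loss = 0 \Leftrightarrow$ best actor as self-evident, whereas you correctly note that for a non-convex encoder-decoder this direction only holds if ``best'' is read as ``stationary point of the actor-critic dynamics'' (which is all the Konda--Tsitsiklis result actually guarantees) or under an additional no-spurious-minima assumption. That caveat is a genuine strengthening of the paper's presentation, not a deviation from it.
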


	\begin{proof}
		The original actor-critic reinforcement learning is to use the score to compute the gradient and use the gradient to update its policy in an approximate gradient direction \cite{konda2000actor}.  The actor is best when the gradient iff $\bigtriangledown=0$. This paper utilizes the graph of neural network to automatically calculate the gradient $\bigtriangledown Loss$ and use backward to optimize the actor. Therefore,  iff $\bigtriangledown Loss=0$,  we obtain the best actor.
	\end{proof}

	\section{Synthetic Incomplete Data Generation}
	\label{synthetic}
	To systematically demonstrate the effectiveness of the proposed method, we use the following model to generate synthetic datasets. Consider a causal graph $G$ with $d$ node and each node is a variable $x_i$, we generate the value of $x_i$ by the following structural equation:
	\begin{equation}
	x_i := f_i(x_{pa(i)}) + n_i\ \ \ \ \  with\ \ n_i \sim N(0,1),i=[1,2,...,d]
	\label{generate}
	\end{equation}
	
	where $f_i$ is a linear or nonlinear function and the noises $n_i$ are mutually independent generated from Gaussian or non-Gaussian distributions. $x_{pa(i)}$ is a vector containing the elements $x_j$ such that there is an edge from $j$ to $i$ in the DAG $G$. The generated data is a matrix which consists of a number of row vectors $x$. Each row vector uses the same generation function in \ref{generate}.
	
	Specifically, the generative model in Eq.(\ref{generate}) belongs to standard linear-Gaussian model family \cite{peters2017elements} if all $f_i$ are linear and $n_i$ are Gaussian distribution. When the functions are linear but the noise $n_i$ are non-Gaussian, we obtain the linear non-Gaussian models described in \cite{hoyer2009nonlinear, shimizu2006linear}. In this paper, all the variables $x_i$ are scalars and it is straightforward to extend to more complex cases with a proper defined generation function. 
	
	To simulate the missing values of observational data, inspired by \cite{mayer2020missdeepcausal}, we randomly remove the attribute values for each sample with the missing probability of 20\%. The goal of causal discovery in this paper is, given the data vectors with missing values, to infer as much as possible about the generating mechanism; in particular, we seek to infer the generating graph $G$. 
	
	\section{Experiments}
	
	\subsection{Baselines and Evaluation Metrics}
	Most of the existing causal discovery methods will totally fail in settings with missing values. To deal with this problem, a straightforward idea is to impute the missing values first and then run the state-of-the-art causal discovery methods. Several experiments were conducted to compare our method with this straightforward idea. We selected GAIN as the imputation method since it has shown robust performances for missing data imputation \cite{yoon2018gain}. Overall, baseline methods we used for experimental comparison include:
	\begin{itemize}
		\item GAIN\cite{yoon2018gain}+NOTEARS \cite{zheng2018dags}. Since NOTEARS is widely used and obtains robust causal discovery results, we compare with it on incomplete data. The NOTEARS uses the default parameters as the released code. The input data is a completed data with imputation data filled in the positions of missing value.
		\item GAIN \cite{yoon2018gain}+RL+BIC2\cite{Zhu2020Causal}. Since RL+BIC2 is the current state-of-the-art causal discovery method, we also directly combine data imputation with  RL+BIC2 and compare with it on incomplete data. The parameters used are the default values.
	\end{itemize}
	
	In the proposed approach, we use pre-trained parameters to initialize the ImNet as discussed in Section \ref{actor}. The pre-trained parameters were generated from a generative adversarial architecture. Then, the input is the incomplete observational data and the whole neural networks were optimized together using the final loss from the \textit{Critic}. We used a batch size of 64 and hidden dimension of the ImNet is same to the data dimension.
	
	Three metrics were adopted to evaluate the estimated graphs:
	\begin{itemize}
		\item \textbf{False Discovery Rate (FDR)}: defined as the expected ratio of falsely discovered positive hypotheses to all those discovered. In the context of estimating the structure of a DAG, a positive hypothesis means that the estimated DAG has an same edge connection with the ground-truth DAG, and a negative hypothesis could be that the estimated DAG has an edge that does not exist in the true DAG. The FDR is the expected proportion of the falsely discovered connections to all those discovered \cite{li2009controlling}. For FDR, the lower the better.
		\item \textbf{True Positive Rate (TPR)}: defined as the expected proportion of the actual correctly discovered edges to all those discovered. For TPR, the higher the better.
		\item \textbf{Structural Hamming Distance (SHD)}: The structural Hamming distance \cite{tsamardinos2006max} is the number of edges that do not coincide in two partially directed acyclic graphs. Partially directed acyclic graph \cite{peters2015structural} is a graph with no directed cycle. Specifically, there is no pair $(k,j)$, such that there are directed paths from $k$ to $j$ and from $j$ to $k$. In this paper, we used partially directed acyclic graphs. Since the SHD considers both false negatives and false positives, a lower SHD indicates a better estimate of the causal graph. 
	\end{itemize}

	\subsection{Linear Models with Gaussian and Non-Gaussian Noises}
	Using the data generation model in Section \ref{synthetic} and following the settings of \cite{Zhu2020Causal}, we generated the datasets to test the performance on linear models with Gaussian and non-Gaussian noise. In particular, we randomly remove the values of each sample in the datasets to simulate the missing value problem. For Gaussian noise, we use a standard Gaussian distribution. For the non-Gaussian noise, we follow ICA-LiNGAM \cite{shimizu2006linear} to firstly generate samples from a Gaussian distribution and secondly use a power non-linearity to make them non-Gaussian. We generate $5000$ samples as datasets and conduct a random permutation. 
	
	\textbf{12 nodes:} Firstly, we conduct comparison experiments on a graph with node number $d=12$. In this experiment, the input of each epoch is randomly selected $n=64$ samples from the whole datasets that generated before and the total epoch number is 20000. Similar to NOTEARS \cite{zheng2018dags} and RL-BIC2 \cite{Zhu2020Causal}, we use $threshold=0.3$ to prune the estimated edges.
	
	Table \ref{t1} shows the comparison results on linear non-Gaussian and linear-Gaussian data models with missing values. Our method obtains better performance than NOTEARS and RL+BIC2. The better results means that our novel encoder integrating into the reinforcement learning framework performs better than directly combining with imputation method. The reason is that both the imputation sub-network and feature sub-network could be optimized together to learn a better feature for the incomplete data. Then, the better feature could be input into the decoder to find a better graph. The low SHD demonstrates our method can robustly generate causal graph from the incomplete data on linear models.
	
	\begin{table}[h]
		\centering
		\caption{Experimental results on incomplete data from linear non-Gaussian and linear Gaussian models with 12 graph nodes.}
		\label{t1}
		\begin{tabular}{c| c c c| c c c } 
			\hline
			&  \multicolumn{3}{c|}{Linear non-Gaussian} & \multicolumn{3}{c}{Linear-Gaussian}\\
			& FDR & TPR & SHD &  FDR & TPR & SHD \\ [0.5ex] 
			\hline\hline
			GAIN+NOTEARS & 0.486 & 0.514 & 31 & 0.5 & 0.459 & 33\\
			\hline
			GAIN+RL+BIC2 & 0.137 & 0.862 & 8  & 0.25 & 0.89 & 15 \\
			\hline
			Ours & 0.156 & 0.931 & 7 & 0.232 & 0.90 & 14   \\
			\hline
		\end{tabular}
	\end{table}

	\textbf{30 nodes:} Secondly, we conduct comparison experiments on graph node $d=30$ and the adjacency matrix is generated from Bernoulli distribution with parameter $0.2$. According to \cite{zheng2018dags,yu2019dag,Zhu2020Causal}, this edge probability choice corresponds to the fact that large graphs usually have low edge degrees in practice. In this experiment, the input of each epoch is randomly selected $n=128$ samples from the whole datasets that generated before and the total epoch number is 40000. Since this edge probability choice corresponds to the fact that large graphs usually have low edge degrees in practice, following \cite{Zhu2020Causal}, we we add to each edge a common bias term initialized to $-10$.
	
	Table \ref{t2} shows the comparison results. Directly combining imputation and causal discovery works not well when the number of graph node increase, RL+BIC2 only obtains 0.538 in TPR and 108 in SHD. NOTEARS obtains 89 in SHD which is worse than RL+BIC2 in solving large causal graph discovery. In contrast, our method obtains much better performance than both of these direct combination strategy, which improves at least 16.7\% in TPR and 32 in SHD. Compared with NOTEARS, we improve 64.1\% in TPR and 42 in SHD, which is a great performance gain.
	
	\begin{table}[h]
		\centering
		\caption{Experimental results on incomplete data from Bernoulli distribution with 30 graph nodes.}
		\label{t2}
		\begin{tabular}{c| c c c } 
			\hline
			& FDR & TPR & SHD  \\ [0.5ex] 
			\hline\hline
			GAIN+NOTEARS &0.648 &0.333 & 89\\
			\hline
			GAIN+RL+BIC2 & 0.523 & 0.807 & 79 \\
			\hline
			Ours         & 0.371& 0.974 & 47  \\
			\hline
		\end{tabular}
	\end{table}

	\subsection{Nonlinear Models with Quadratic Functions and Gaussian Process} \label{nonlinear}
	\textbf{Nonlinear model with quadratic functions}. 
	Furthermore, we evaluated the performance of our approach on data generated from non-linear models with quadratic functions. We used the same data generation process with \cite{Zhu2020Causal} and generated $5000$ samples, each has 10 attributes. To simulate missing data, we randomly remove the values for each sample with a missing probability of $20\%$. In this situation, identifiability of the true causal graph is guaranteed \cite{peters2014causal}.
	
	Experimental results are listed in Table \ref{t_nonli}. Compared with NOTEARS and RL+BIC2, our approach obtains better performance with a gain of 31\% in TPR and 6 in SHD. With a significantly lower SHD, we argue that our approach can robustly generate causal graph from incomplete data of nonlinear model with quadratic functions.
	
	\begin{table}[b]
		\centering
		\caption{Experimental results on incomplete data from nonlinear models}
		\label{t_nonli}
		\begin{tabular}{c| c c c |c c c } 
			\hline
			&  \multicolumn{3}{c|}{Quadratic function} & \multicolumn{3}{c}{Gaussian Process}\\
			& FDR & TPR & SHD & FDR & TPR & SHD   \\ [0.5ex] 
			\hline\hline
			GAIN+NOTEARS &0.625 &0.261 &23 &0.35 &0.295 &31\\
			\hline
			GAIN+RL+BIC2 & 0.33 & 0.34 & 19 & 0.071  & 0.295  & 31 \\
			\hline
			Ours & 0.25 & 0.65 & 13& 0.0857  & 0.727  & 12  \\
			\hline
		\end{tabular}
	\end{table}

	\textbf{Nonlinear model with Gaussian process}.
	We also compared our approach with other baselines on nonlinear models with Gaussian process. Using the simulation process in \cite{Zhu2020Causal}, we further generated missing values to get the observational data by randomly removing values from each sample with a missing probability of 20\%. 
	
	As we can see from Table \ref{t_nonli}, our method achieves a performance gain of 43.2\% in TPR and improves 19 in SHD. In addition, we can also conclude from the result that, by direct combination with data imputation, state-of-the-art causal discovery methods (NOTEARS and RL+BIC2) all perform poorly in nonlinear models when there are missing values. The reason is that their data imputation is a stand-alone stage which may be not capable to impute a suitable data for the causal graph discovery. Our method combines both data imputation and feature extraction into an encoder and the encoder integrate into a reinforcement learning framework to search the best DAG to fit the incomplete data. The experiments demonstrate that we can obtain much better performance than the direct combination of imputation and DAG learning methods. Moreover, the low SHD demonstrates our method can robustly generate causal graph from the incomplete data based on non-linear models.

	\subsection{Real Data}
	We also validate our proposed approach on a real-world dataset which is designed for reconstruction of causal graphs from physiologically relevant primary single cells \cite{sachs2005causal}. Data were collected after a series of inhibitory interventions and stimulatory cues. Specifically, researchers used a fixed stimulation process to stop cell reactions for $15min$, and then recorded and analyzed the effects of each condition on the intracellular signaling networks of human primary naive $T$ cell. This dataset is well accepted by the biological community and widely utilized in many domains \cite{Zhu2020Causal}. In our experiment, we consider the general perturbation that is to activate $T$ cells and induce proliferation and cytokine production. The observational data contains 853 samples and 11 attributes. According to \cite{sachs2005causal}, the ground truth causal graph has 17 edges evaluated manually with high accuracy. To simulate missing values, we randomly removed the value of each sample with a probability of 20\%. 
	
	We run the same graph prune process as the above experiment setting during the training for both RL+BIC2 and our approach. The experiment results are listed in Table \ref{t_real}. As we can see from the table, our proposed approach obtains significantly better performance than its competitors. Noticeably, the TPR results of all methods are very low for this real dataset. A possible reason is that the ground truth graph is very sparse with only 17 edges. Since the SHD considers both false negatives and false positives, the lower SHD in our result demonstrates the advantage of our approach in handling real incomplete data further indicates its ability for robustly generating causal graphs from the incomplete real data.
	
	\begin{table}[h]
		\centering
		\caption{Experimental results on the real dataset with missing values. }
		\label{t_real}
		\begin{tabular}{c| c c c} 
			\hline
			& FDR & TPR & SHD  \\ [0.6ex] 
			\hline\hline
			GAIN+NOTEARS & 0.692 & 0.353 & 18 \\ 
			\hline
			GAIN+RL+BIC2 & 0.8 & 0.118 & 18 \\
			\hline
			Ours & 0.417  & 0.411 & 13 \\
			\hline
		\end{tabular}
	\end{table}

	\section{Conclusion }
	In this paper, we proposed a causal discovery approach for learning causal graphs from observational data with missing values. In particular, we develop an ad-hoc encoder network for extracting a robust feature representation of the incomplete data, and then integrate the learned representation into a reinforcement learning framework to search for the best causal graph. The proposed approach uses deep neural networks to imitate the human brain information process on incomplete data. Experiments on both synthetic and real datasets demonstrate the effectiveness of our approach which can obtain as much as 43.2\% performance gain compared to existing state-of-the-art approaches.

	{\small
		\bibliographystyle{ieee_fullname}
		\bibliography{egbib}
	}
\end{document}